\documentclass[letterpaper, 10 pt, conference]{ieeeconf}  
\IEEEoverridecommandlockouts                                                                           \usepackage{amssymb}
\usepackage{amsmath}
\usepackage[english]{babel}
\usepackage{float}
\usepackage{graphicx}
\usepackage{hyperref}
\usepackage{mathtools}
\usepackage{graphicx}
\usepackage[noadjust]{cite}
\usepackage[font=footnotesize,labelfont=footnotesize]{subcaption}
\usepackage[font=footnotesize,labelfont=footnotesize]{caption}
\usepackage{eqparbox}
\usepackage{comment}
\usepackage{authblk}
\usepackage{multirow}
\usepackage{environ}
\usepackage{graphicx}
\usepackage{booktabs}
\usepackage{siunitx}

\usepackage{algorithm, algorithmicx}

\usepackage{amsthm} 
\usepackage[noend]{algpseudocode}
\usepackage[english]{babel}

\let\labelindent\relax

\usepackage[inline]{enumitem}
\newtheorem{innercustomgeneric}{\customgenericname}
\providecommand{\customgenericname}{}
\newcommand{\newcustomtheorem}[2]{%
  \newenvironment{#1}[1]
  {
   \renewcommand\customgenericname{#2}
   \renewcommand\theinnercustomgeneric{##1}
   \innercustomgeneric
  }
  {\endinnercustomgeneric}
}

\newcustomtheorem{customthm}{Theorem}
\newcustomtheorem{customlemma}{Lemma}
\newcustomtheorem{customprop}{Proposition}

\theoremstyle{plain}  
\newtheorem{theorem}{Theorem}
\newtheorem{lemma}[theorem]{Lemma}
\newtheorem{definition}[theorem]{Definition}
\newtheorem{remark}{Remark}
\newtheorem{proposition}[theorem]{Proposition}

\algnewcommand\algorithmicinput{\textbf{Input:}}
\algnewcommand\algorithmicoutput{\textbf{Output:}}
\algnewcommand\Input{\item[\algorithmicinput]}%
\algnewcommand\Output{\item[\algorithmicoutput]}%

\title{\LARGE \bf
Geometry-Aware Control Barrier Functions for Collision Avoidance \\via Bernstein Polynomial Approximations
}
\author{Siwon Jo$^{1}$, Yanze Zhang$^{2}$, Yupeng Yang$^{3}$, and Wenhao Luo$^{2}$
\thanks{$^*$This work was supported in part by the U.S. National Science Foundation under Grants 2530297 and 2528997.}
\thanks{$^{1}$Siwon Jo is with the GRASP Laboratory, University of Pennsylvania, Philadelphia, PA, USA.
        Email: {\tt jo16@seas.upenn.edu}}%
\thanks{$^{2}$Yanze Zhang and Wenhao Luo are with the Department of Computer Science, University of Illinois Chicago, Chicago, IL, USA. Email:{\tt \{yzhan361,wenhao\}@uic.edu}}%
\thanks{$^{3}$Yupeng Yang is with the Department of Computer Science, University of North Carolina at Charlotte, Charlotte, NC, USA. Email: {\tt yyang52@charlotte.edu}}%
}

\begin{document}

\maketitle
\thispagestyle{empty}
\pagestyle{empty}

\begin{abstract}
Safe navigation often relies on well-defined conditions based on the shape of robots and obstacles, and can be challenging when they have irregular geometries. While Control Barrier Functions (CBFs) offer an efficient mechanism to enforce safe set forward invariance, common shape surrogates (e.g., spheres or super-ellipsoids) either are overly conservative in unstructured scenes or require many local primitives, which inflates constraint counts and degrades real-time performance. In this paper, we introduce a novel geometry-aware Control Barrier Function (CBF) based on Bernstein–Polynomial Signed Distance Fields (BP-SDFs). It provides a unified way to represent the obstacles and robots, so as to represent the barrier function with a unified minimum distance. Benefiting from the differentiability of the Bernstein polynomials, one can easily enforce the control constraints in a closed loop. We validate the method's efficiency and performance to guarantee safety in single-robot navigation and heterogeneous multi-robot collision avoidance via simulations under different environments. 
\end{abstract}

\section{Introduction}

Safe navigation is both essential and inherently challenging in most robotics systems~\cite{ren2025safety,zhang2024courteous,lyu2023risk,kim2025visibility}, especially when robots and obstacles exhibit irregular geometries.
The specification of safe conditions depends on how shapes interact across poses rather than on point-to-point (e.g., center-to-center) distance~\cite{otte2009path, wu2025optimization}.
On the other hand, enforcing safe motions needs to consider the dynamics and control limits of the robots, as well as their impact on the evolving safe conditions.
Thus, it is critical to characterize geometry-aware safety that reflects the true risk of contact and design controllers that can effectively enforce safety during operation.

Control Barrier Functions (CBFs)~\cite{ames2019control} have emerged as an efficient tool for certifying safety,
which are typically implemented as safety filters that modify the nominal control inputs to ensure the robot is always within the safe set.
However, traditional CBFs-related approaches use spheres \cite{wang2017safety} or super-ellipsoids \cite{wang2017safe} to define the safe state space by benefiting from their simplicity and differentiability, which would not work well for unstructured environments. In particular, in unstructured environments, existing methods either inflate the geometry of robots and obstacles \cite{xiao2021rule} or approximate obstacle boundaries with sets of small spheres \cite{yang2023minimally,yang2024decentralized}. Nevertheless, the former introduces overly conservative behaviors, and the latter increases the number of constraints, which in turn impacts the efficiency of the solution.

In this context, the work in \cite{11312188} builds non-conservative CBFs by computing differentiable signed distances (and penetration depth) between a polytopic robot and polytopic obstacles via convex programs in the Minkowski-difference space, which supports geometry-exact safety constraints within a real-time Control Lyapunov Function – Control Barrier Function (CLF–CBF) control framework. However, the safety is enforced pairwise per obstacle in static multi-obstacle scenarios, which may increase infeasibility under tight bounds and require significant computational overhead for complex environments.

Existing works have also explored learning unified CBFs for obstacles in real-time, but they often struggle with computational inefficiency, which limits their application in the real world. For example, work in~\cite{khan2022gaussian} uses Gaussian Process (GP) to model the CBFs. However, maintaining and updating the GP posterior in the control loop scales cubically with the number of data points and carries sizable constants (e.g., kernel factorizations, hyperparameter tuning), which makes high-rate control difficult as data accumulate. Alternatively, \cite{long2021learning} adopts an implicit neural representation of the CBF with backpropagated gradients but only considers single-robot cases, leaving inter-robot collision avoidance as an open challenge.

More recently, the authors in \cite{li2024representing} used Bernstein polynomials to approximate signed distance fields (BP-SDF), thereby enabling the representation of safe space. While this advances geometric modeling, the method primarily focuses on static safety representation and does not explicitly show how the learned safe representation can be converted into \emph{control constraints} with closed-loop safety guarantees. As a result, it is not directly applicable to controller synthesis in dynamic scenarios and multi-robot settings, where multiple moving robots may interact closely with one another. Building on this work, we propose to extend the BP-SDF to define novel CBFs that convert pose-dependent geometry into certifiable safety constraints directly governing collision-free motions for the robots in real time. Specifically, we represent both the robot and the workspace obstacles using a BP-SDF in their body frames and evaluate them in the world frame via rigid transformations.
Then, the Karush–Kuhn–Tucker (KKT) conditions are analyzed to find the minimum distance that is used to define our geometry-aware CBFs.
Benefiting from the differentiability of the Bernstein polynomials, we are able to explicitly define the admissible control space, leading to collision-free movements of the robots with irregular shapes. This yields forward-invariant safety with minimal deviation from nominal commands and scales to dynamic, multi-robot scenes.
Our \textbf{contributions} can be summarized as:
\begin{itemize}
    \item We introduce a novel unified barrier function representation with a single SDF for multiple obstacles using Bernstein polynomials. This approach provides a common template for deriving geometry-aware control constraints for safe navigation in unstructured environments.
    \item We define a novel notion of \emph{Geometry-aware Control Barrier Functions} built on Bernstein polynomial representations and explicitly derive control constraints from the proposed CBFs for arbitrary smooth convex SDF level sets, which yields closed-loop safety certificates.
    \item We include results from the experiments for multi-robot collision avoidance and multi-obstacle robot navigation that verify the effectiveness of the framework.
\end{itemize}

\section{Preliminaries}
Considering a team of $N$ robots moving in the $d$-dimensional space $\mathbb{R}^{d}$, the position of each robot is denoted as $\mathbf{x}_i\in \mathcal{X}_i \subset \mathbb{R}^{d}, \forall i \in \{1,...,N\}$, with dynamics governed by:
\begin{equation}\label{eq:dynamics}
    \dot{\mathbf{x}}_i = f_i(\mathbf{x}_i) + g_i(\mathbf{x}_i)\mathbf{u}_i
\end{equation}
where $f_i:\mathbb{R}^{d}\rightarrow\mathbb{R}^{d}$ and $g_i:\mathbb{R}^{d}\rightarrow\mathbb{R}^{d\times q}$ are locally Lipschitz continuous and $\mathbf{u}_i\in\mathbb{R}^{q}$ is the control input for robot $i$.

\subsection{Control Barrier Functions}
In this section, we briefly review the concept of Control Barrier Functions (CBFs) \cite{ames2019control}.  
Given the control-affine system in Eq.~\eqref{eq:dynamics}, a desired safe set for the state of robot $i$ can be specified as the 0-superlevel set of a continuously differentiable function $h:\mathbb{R}^d \rightarrow \mathbb{R}$,
\begin{equation}\label{eq:h_regular}
 \mathcal{H}_i \;=\; \{\mathbf{x}_i \in \mathbb{R}^{d} \;:\; h(\mathbf{x}_i) \geq 0\}.
\end{equation}

CBFs have been used to design control constraints that enforce the forward invariance of the safe set $\mathcal{H}_i$ for robot $i$, i.e., if the system starts within $\mathcal{H}_i$, it remains there for all future time under satisfying controllers. The key result can be summarized as follows.

\begin{lemma}
\label{lem:cbf}(Summarized from \cite{ames2019control})
Given a dynamical system affine in control and a desired set $\mathcal{H}_i$ as the 0-superlevel set of a \textbf{continuously differentiable} function $h(\mathbf{x}_i): \mathcal{X}_i \rightarrow \mathbb{R}$, the function $h$ is called a control barrier function, if there exists an extended class-$\mathcal{K}$ function\footnote{In the rest of this letter, we select the particular choice of $\kappa(h(\mathbf{x}_i))=\gamma h(\mathbf{x}_i)$ with $\gamma$ as a user-defined parameter \cite{ames2019control}.}  $\kappa(\cdot)$ such that 
$\sup_{\mathbf{u}_i\in\mathcal{U}_i}\{\dot{h}(\mathbf{x}_i, \mathbf{u}_i)\}\geq -\kappa(h(\mathbf{x}_i))$ for all $\mathbf{x}_i\in\mathcal{X}_i$. The admissible control space for any Lipschitz continuous controller $\mathbf{u}_i \in \mathcal{U}_i$ rendering $\mathcal{H}_i$ forward invariant (i.e., keeping the system state $\mathbf{x}_i$ in $\mathcal{H}_i$ over time) thus becomes: 
\begin{align}\label{eq:cbc_lemma}
    \mathcal{B}(\mathbf{x}_i) = \{ \mathbf{u}_i\in \mathcal{U}_i | L_f h(\mathbf{x}_i) + L_gh(\mathbf{x}_i)\mathbf{u}_i + \kappa(h(\mathbf{x}_i))\geq 0 \} \notag
\end{align}
where $L_f$ and $L_g$ denote the Lie derivatives along the vector fields $f$ and $g$, respectively.
\end{lemma}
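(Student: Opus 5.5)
The plan is the standard comparison-lemma argument. Fix a Lipschitz continuous feedback $\mathbf{u}_i(\mathbf{x}_i)\in\mathcal{B}(\mathbf{x}_i)$, which exists because $h$ is a control barrier function. First I will check well-posedness. Since $f_i,g_i$ are locally Lipschitz and $\mathbf{u}_i$ is Lipschitz, the closed-loop vector field $F(\mathbf{x}_i)=f_i(\mathbf{x}_i)+g_i(\mathbf{x}_i)\mathbf{u}_i(\mathbf{x}_i)$ is locally Lipschitz. Picard--Lindel\"of therefore gives, for any $\mathbf{x}_i(0)\in\mathcal{H}_i$, a unique solution on a maximal interval $[0,\tau_{\max})$.

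Next I define $\phi(t)=h(\mathbf{x}_i(t))$. Because $h$ is continuously differentiable and $\mathbf{x}_i(t)$ is $C^1$, the chain rule gives
\begin{equation*}
\dot\phi(t)=\nabla h(\mathbf{x}_i)^\top F(\mathbf{x}_i)=L_fh(\mathbf{x}_i)+L_gh(\mathbf{x}_i)\mathbf{u}_i .
\end{equation*}
Membership $\mathbf{u}_i\in\mathcal{B}(\mathbf{x}_i)$ then yields the scalar differential inequality $\dot\phi(t)\ge-\kappa(\phi(t))$ with $\phi(0)\ge 0$.

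The key step is turning this inequality into $\phi(t)\ge0$. For the choice $\kappa(h)=\gamma h$ in the footnote it is elementary: $\frac{d}{dt}\big(e^{\gamma t}\phi(t)\big)=e^{\gamma t}(\dot\phi+\gamma\phi)\ge0$, so $\phi(t)\ge e^{-\gamma t}\phi(0)\ge0$ on $[0,\tau_{\max})$. For a general extended class-$\mathcal{K}$ function I would use the comparison lemma. Let $\psi$ solve $\dot\psi=-\kappa(\psi)$ with $\psi(0)=\phi(0)\ge0$. Since $\kappa(0)=0$, $\psi\equiv0$ is an equilibrium, so the scalar flow cannot cross zero and $\psi(t)\ge0$. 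Then $\phi\ge\psi\ge0$.

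I expect this comparison step to be the main obstacle, because uniqueness of the comparison ODE needs $\kappa$ to be locally Lipschitz. If only continuity is available, I would argue by contradiction instead. Suppose $t_1$ is a first time with $\phi(t_1)<0$, and let $t_0=\sup\{t<t_1:\phi(t)\ge0\}$. On $(t_0,t_1]$ we have $\phi<0$, hence $-\kappa(\phi)>0$ and $\dot\phi>0$. Then $\phi(t_1)>\phi(t_0)=0$, a contradiction. This argument needs only strict monotonicity of $\kappa$ with $\kappa(0)=0$.

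Finally, I conclude that $\mathbf{x}_i(t)\in\mathcal{H}_i$ for all $t\in[0,\tau_{\max})$, which is forward invariance of $\mathcal{H}_i$ on the solution's interval of existence. If $\mathcal{H}_i$ is compact, the solution stays in a compact set, so $\tau_{\max}=\infty$.
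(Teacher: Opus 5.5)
The paper gives no proof of this lemma; it is imported from \cite{ames2019control}. Your proof is correct. Your first route, comparing $h(\mathbf{x}_i(t))$ with the solution of $\dot y=-\kappa(y)$, is the usual sufficiency argument in that literature. It needs $\kappa$ locally Lipschitz, and in exchange it gives a quantitative lower bound on $h(\mathbf{x}_i(t))$ that also covers initial states outside $\mathcal{H}_i$. Your exit-time contradiction is more elementary and more general. It uses nothing about $\kappa$ beyond $\kappa(s)<0$ for $s<0$, and it needs no regularity of $\partial\mathcal{H}_i$. By contrast, the version stated in the cited source also assumes $\nabla h\neq 0$ on $\partial\mathcal{H}_i$, which is the hypothesis a Nagumo/tangent-cone argument requires, because at boundary points the inequality only gives $\dot h\ge 0$.

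The price of the contradiction argument is that you evaluate the inequality at states with $h<0$. You should therefore state explicitly that $\mathbf{u}_i(\mathbf{x}_i)\in\mathcal{B}(\mathbf{x}_i)$ holds on all of $\mathcal{X}_i$, an open set containing $\mathcal{H}_i$; the lemma's ``for all $\mathbf{x}_i\in\mathcal{X}_i$'' supplies this. This is also what makes the paper's summarized statement, which omits the gradient condition, valid as written. If the constraint were imposed only on $\mathcal{H}_i$ itself, your argument would not apply.

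Two small corrections.
\begin{itemize}
\item The clause ``which exists because $h$ is a control barrier function'' is unjustified. The supremum need not be attained, and pointwise nonemptiness of $\mathcal{B}(\mathbf{x}_i)$ does not produce a Lipschitz selection. The lemma is a statement about any given Lipschitz controller with values in $\mathcal{B}$, so simply drop the clause.
\item There need not be a \emph{first} time with $\phi<0$, since that set of times is open. However, any $t_1$ with $\phi(t_1)<0$ works with your definition of $t_0$, and $\phi(t_0)=0$ then follows from continuity.
\end{itemize}
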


Existing work \cite{wang2017safety} leverages Lemma~\ref{lem:cbf} to enforce pairwise distance constraints between robots or between robots and obstacles. However, directly encoding such constraints becomes cumbersome in environments with complex or irregular geometries. Note that when multiple CBF constraints are present, the safety guarantee relies on the composition of CBFs. Readers are referred to \cite{wang2017safe,ong2023nonsmooth} for detailed discussion. For systems with higher-order dynamics, the corresponding extension can be found in \cite{xiao2019control}.
This motivates the use of a more general geometric representation of safe sets, which leads us to the notion of Signed Distance Fields (SDF) and Bernstein polynomials discussed next.

\subsection{Signed Distance Fields Representation using Bernstein Polynomials}
Existing work has adopted \emph{Signed Distance Field} (SDF) \cite{mukadam2018continuous} as a general representation of safe set of states. This provides an implicit and differentiable description of arbitrary geometries summarized below.

\begin{definition}(Summarized from \cite{mukadam2018continuous})[Signed Distance Field]
For a set $\Omega \subset \mathbb{R}^d$, the signed distance field is defined as
\begin{equation}
    S_{\Omega}(\mathbf{p}) = \pm \inf_{\mathbf{p}' \in \partial \Omega}\|\mathbf{p} - \mathbf{p}'\|_2,
\end{equation}
where $\mathbf{p}$ is a point in the Euclidean space, with $S_{\Omega}(\mathbf{p}) < 0$ if $\mathbf{p} \in \Omega$, $S_{\Omega}(\mathbf{p}) = 0$ if $\mathbf{p} \in \partial \Omega$, and $S_{\Omega}(\mathbf{p}) > 0$ otherwise.
\end{definition}

It is noted that directly storing or evaluating $S_{\Omega}(\mathbf{p})$ for arbitrary shapes can be computationally expensive, and in many cases the complex geometry of $\Omega$ makes an explicit analytic form intractable.  
To overcome this challenge, existing work has employed \emph{Bernstein polynomial} (BP) approximations of SDFs \cite{li2024representing}, which yield a compact and smooth parameterization suitable for optimization and control.

\begin{definition}(Summarized from \cite{calinon2019mixture})[Bernstein Polynomial Basis]\label{def:bp}
Given $\xi \in [0,1]$, let $\phi(\xi)\in\mathbb{R}^Q$ be the vector of basis functions whose $q$-th element, for Bernstein polynomials of order $Q$ (degree $Q\!-\!1$), is
\[
\; \phi_q(\xi) \;=\; \binom{Q-1}{q}\,\xi^{q}(1-\xi)^{\,Q-1-q},\;\; q=0,\ldots,Q-1.
\]
With this, for any point $\mathbf{p} \in \mathbb{R}^d$, the multivariate basis is constructed via the Kronecker product
\begin{equation}\label{eq:basis}
    \Psi(\mathbf{p}) \;=\; \bigotimes_{p=1}^{d} \phi(\xi_p) \;\in\; \mathbb{R}^{Q^{d}},
\end{equation}
where \(\xi_p=\dfrac{[\mathbf{p}]_p-[\mathbf{p}]^{\min}_p}{[\mathbf{p}]^{\max}_p-[\mathbf{p}]^{\min}_p}\in[0,1]\) and \(p=1,\ldots,d\) are the dimension-wise normalized coordinates of $\mathbf{p}$ (i.e., the normalized value of the point's $p$-th dimension), where $[\mathbf{p}]^{\min}$ and $[\mathbf{p}]^{\max}$ cover the maximum relative distance.
\end{definition}

Thus, given Definition~\ref{def:bp}, the SDF is approximated as a weighted combination of the \(Q^d\) tensor–product Bernstein basis functions (order \(Q\) per dimension) \cite{li2024representing}:

\begin{equation}\label{eq:bp-sdf}
    \hat S_{\Omega}(\mathbf{p})
    \;=\; \langle \Psi(\mathbf{p}),\, \mathbf{w} \rangle
    \;=\; \Psi(\mathbf{p})^\top \mathbf{w},
\end{equation}
where $\hat S_{\Omega}(\mathbf{p})\in\mathbb{R}$, \(\Psi(\mathbf{p}) \in \mathbb{R}^{Q^{d}}\) is the multivariate Bernstein basis vector evaluated at the point \(\mathbf{p}\), and
\(\mathbf{w} \in \mathbb{R}^{Q^{d}}\) are the coefficients. $\langle \cdot, \cdot \rangle$ is the operation of the inner product.

Following \cite{li2024representing}, the coefficients $\mathbf{w}$ can be obtained via linear least-squares regression under the assumption that robot shapes are learned offline.  
Suppose we have $K$ sampled points $\{\mathbf{p}^{(j)}, S_{\Omega}(\mathbf{p}^{(j)})\}_{j=1}^K$ with ground-truth signed distances $S_{\Omega}(\mathbf{p}^{(j)})$.  
By stacking the basis evaluations into a design matrix $\mathbf{\Psi} \in \mathbb{R}^{Q^d \times K}$ and the ground-truth distances into $\mathbf{f} \in \mathbb{R}^K$, the linear regression problem is
\begin{equation}
    \mathbf{w}^* = \arg\min_{\mathbf{w}} \|\mathbf{\Psi}^\top\mathbf{w} - \mathbf{f}\|_2^2 
\end{equation}
If $\mathbf{\Psi}$ has full row rank, the closed-form solution can be formally defined as:
\begin{equation}
   \mathbf{w}^* \;=\; \big(\mathbf{\Psi}\,\mathbf{\Psi}^\top\big)^{-1}\,\mathbf{\Psi}\,\mathbf{f}
\end{equation}
With the closed-form coefficients $\mathbf{w}^*$, we can obtain the smooth SDF approximation $\hat S_{\Omega}(\mathbf{p})=\Psi(\mathbf{p})^\top\mathbf{w}^*$ over the workspace. 
However,
obtaining the solution via explicit inversion of a large matrix is unnecessary and often prohibitive, both computationally and in memory, and can be numerically unstable~\cite{hager1989updating}. Readers are referred to~\cite{li2024representing} for more detailed discussions.

\subsection{Problem Statement}
In this paper, our objective is to develop a control framework that \emph{leverages the Bernstein–polynomial approximation} $\hat S_{\Omega}$ of the SDF and \emph{employs control barrier functions (CBFs)} to encode and enforce safety in realistic, complex environments. While the BP–SDF surrogate offers a compact and differentiable geometric representation, translating this approximation into formal safety guarantees remains non-trivial in cluttered scenes and in the presence of interacting agents. With that, given any control bound $\mathbf{u}_\mathrm{max}\in\mathbb{R}$ of each robot $i$, the problem becomes how to minimally modify any pre-defined nominal input $\mathbf{u}^\mathrm{nom}$ with BP–SDF–defined safe set, i.e., enforcing nonnegative distance between the robot $\hat{S}_{\mathcal{R}}(\cdot)$ and the non-robot $\hat{S}_{\mathcal{O}}(\cdot)$:
\begin{align}\label{eq:problem}
    &\qquad \mathbf{u}^* = \arg\min_\mathbf{u} ||\mathbf{u}-\mathbf{u}^\mathrm{nom}||^2 \\
    \text{s.t.}
    &\quad \mathbf{x}\in \{\text{dist}(\hat S_\mathcal{O}(\cdot)=0, \hat S_\mathcal{R}(\cdot)=0) \geq 0 \} \label{eq:state}\\
     &\qquad \qquad ||\mathbf{u}|| \leq \mathbf{u}_\mathrm{max} \notag
\end{align}

Accordingly, in this paper, we focus on two questions:
\begin{itemize}
    \item \textbf{Single-robot safe navigation.} Given a BP–SDF approximation of the workspace containing multiple irregular-shaped static obstacles,
    how can we transfer the state constraints in Eq.~\eqref{eq:state} to control constraints and design a control framework that can guarantee that the robot safely navigates in the workspace? 
    \item \textbf{Multi-robot extension.} Given a team of $N$ robots where each robot $i$ maintains a robot-specific BP–SDF $\hat S_{\Omega}^{i}$, how can this framework be extended to multiple robots so that collisions among moving agents are avoided?
\end{itemize}

\section{Method}

In this section, we build on the unified SDF representation in Eq.~\eqref{eq:bp-sdf} to formulate novel control barrier functions that directly encode geometry-aware collision avoidance conditions and derive corresponding control constraints. We first present the single-robot SDF barrier certificates and then extend the approach to the multi-robot setting.

\subsection{Unified Barrier Function with Bernstein polynomials}\label{sec:bp_unified}
Let $\mathcal O=\bigcup_{j=1}^{M}o_j$ denote the union of all obstacles. Following the work~\cite{li2024representing}, we use Bernstein polynomials to represent the Signed Distance Field (SDF) of the robot $\mathbf{x}_i$ and obstacles $\mathbf{x}^\mathrm{obs}_o \in \mathcal O $. As shown in Fig.~\ref{fig:kkt_idea}(a) and leveraging Eq.~\eqref{eq:bp-sdf}, we represent the robot $i$ by a SDF $\hat S^{i}_{\mathcal R}(\cdot)$ and the composite obstacle set by a single \emph{unified} SDF $\hat S_{\mathcal O}(\cdot)$.

\begin{equation}
    \hat S^{i}_{\mathcal R}(\mathbf{p}_i) = \langle\Psi(\mathbf{p}_i),\mathbf{w}_i\rangle, \quad
    \hat S_{\mathcal O}(\mathbf{p}^\mathrm{obs}_o) = \langle\Psi(\mathbf{p}^\mathrm{obs}_o),\mathbf{w}_o\rangle \notag
\end{equation}
where $\hat S^{i}_{\mathcal R}(\cdot)$ is in the robot's local frame and $\hat S_{\mathcal O}(\cdot)$ is in the unified obstacle frame. $\mathbf{p}_i$ and $\mathbf{p}_{o}^{\mathrm{obs}}$ represent the point $\mathbf{p}$ in the robot frame and unified obstacle frame.

\begin{figure}[t]
    \centering  \includegraphics[width=\linewidth]{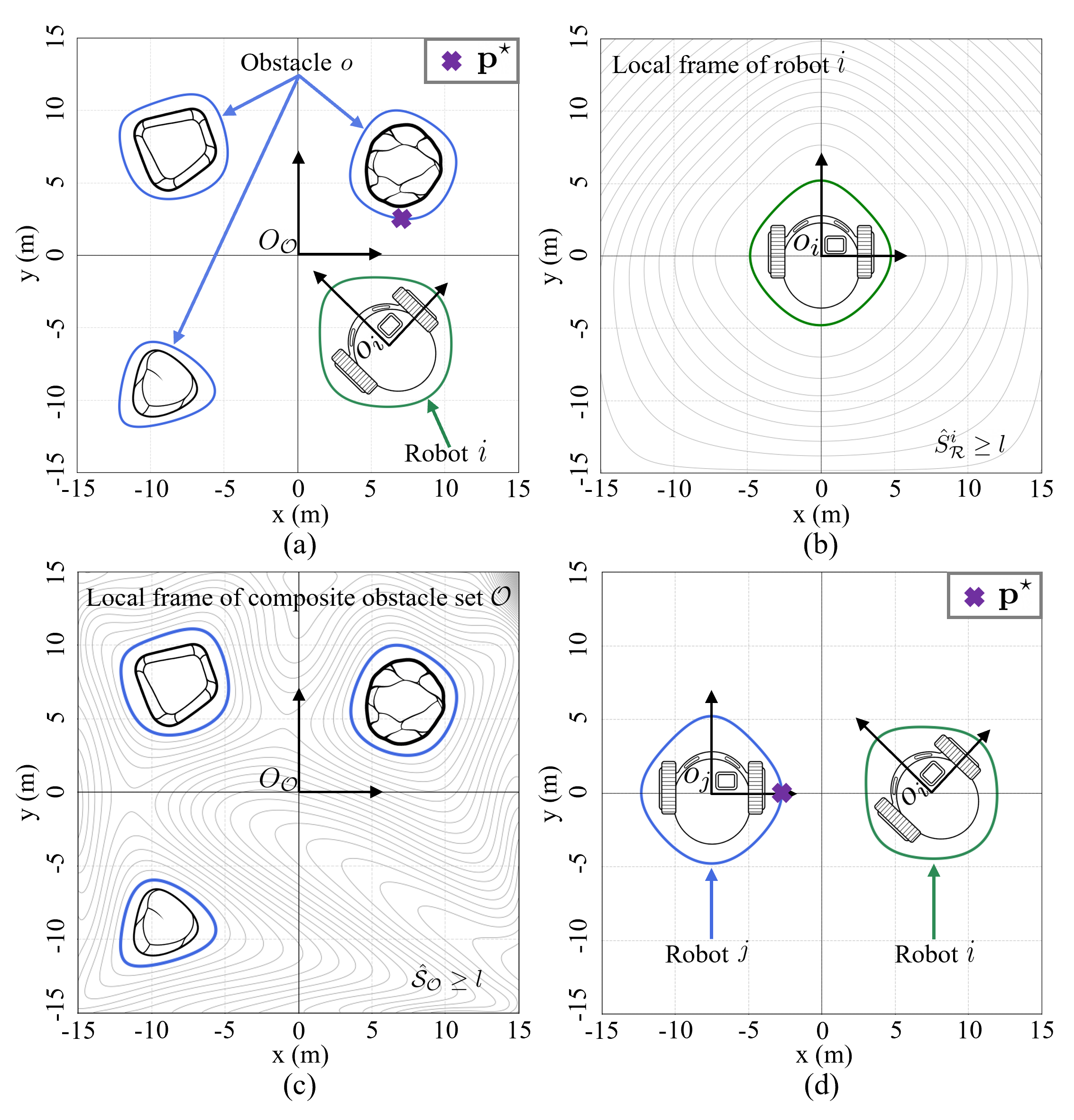}
    \caption{Examples of the level sets for SDF approximation using Bernstein polynomials on robots and obstacles.
    \textbf{(a)} 
    With a safety margin $l\in\mathbb{R}$, the green contour represents the $l$-level set of the approximated SDF of the robot $i$. The three blue contours represent the $l$-level set of the approximated SDF w.r.t. all three obstacles in the workspace using a single Bernstein polynomial representation.
    \textbf{(b)} The green contour
    represents the $l$-level set of the approximated SDF of the robot $i$ in the robot frame $O_i$, satisfying $\hat{\mathcal{S}}_{\mathcal{R}} \geq l$.
    \textbf{(c)} The blue contours represent the $l$-level set of the approximated SDF of the composite obstacle set $\mathcal{O}$ in the frame $O_\mathcal{O}$, satisfying $\hat{\mathcal{S}}_{\mathcal{O}} \geq l$.
    \textbf{(d)} The green and blue contours represent the $l$-level sets of the approximated SDF for robot $i$ and $j$.
    }   
    \label{fig:kkt_idea}
\end{figure}

We define the function $\Phi(\mathbf{x}, \mathbf{p}_{\mathrm{local}})$ to transform the point in the local frame $p_{\text{local}}$ to the global frame $p$. With this,

\begin{equation}
    F_{i}(\mathbf{x}_i, \mathbf{p}) = \hat S_{\mathcal R}^{i}(\Phi_i^{-1}(\mathbf{x}_i, \mathbf{p}))
\end{equation}
represents the distance between any point $\mathbf{p}$ and the boundary of the robot $i$ in the space. 
 
Therefore, $F_{i}(\mathbf{x}_i, \mathbf{p}) = 0$ can be used to define the zero level set for the robot $i$ given its SDF. 

Inspired by~\cite{dai2024sailing}, we first characterize the safe space using the minimum distance between the robot and the composite obstacle set. To this end, we formulate the following optimization problem to find the closest point $\mathbf{p}^*$ on the obstacle boundary to the robot:
\begin{align}\label{eq: h_sol}
    \mathbf{p}^*_{i,o} &= \arg\min_{\mathbf{p}} \;F_{i}(\mathbf{x}_i, \mathbf{p})  \\
     & \textnormal{s.t.} \quad F_{o}(\mathbf{x}_o^{\mathrm{obs}}, \mathbf{p}) = 0 \notag
\end{align}
As $F_i(\mathbf{x}_i,\mathbf{p})$ denotes the signed distance from $\mathbf{p}$ to the boundary of robot $i$, 
minimizing it corresponds to finding the closest point to the robot. And the constraint $F_o(\mathbf{x}_o^{\mathrm{obs}},\mathbf{p})=0$ enforces that $\mathbf{p}$ lies on the obstacle boundary. Therefore, as illustrated in Fig.~\ref{fig:kkt_idea}, the solution $\mathbf{p}^*_{i,o}$ is exactly the obstacle boundary point with the minimum distance from the robot, which is then used to define the barrier function for collision avoidance. By solving the optimization problem in~\eqref{eq: h_sol}, one can obtain the optimal solution $\mathbf{p}_{i,o}^*$, which is the closest point on the obstacle boundary to robot $i$.

Hence, to formally characterize safety, we directly utilize the property of the BP-SDF, which returns the signed distance from the robot to any given point. By evaluating the robot's SDF at the closest obstacle-boundary point $\mathbf{p}_{i,o}^*$, we formally define the \textbf{Geometry-aware Control Barrier Function (CBF)} as:
\begin{equation}\label{eq:h_signed}
    h(\mathbf{x}_i,\mathbf{x}_o^{\mathrm{obs}})
    =
    F_i(\mathbf{x}_i,\mathbf{p}_{i,o}^*)
\end{equation}
Therefore, $h(\mathbf{x}_i,\mathbf{x}_o^{\mathrm{obs}})\ge 0$ indicates that robot $i$ remains away from the obstacle, while $h(\mathbf{x}_i,\mathbf{x}_o^{\mathrm{obs}})< 0$ indicates overlap between the robot and the obstacle. Accordingly, the safe set is defined as
\begin{equation}\label{eq:our_safe_set}
    \mathcal{H}_{i,o}^{\mathrm{obs}}
    =
    \left\{
    (\mathbf{x}_i,\mathbf{x}_o^{\mathrm{obs}})\in\mathbb{R}^d
    \mid
    h(\mathbf{x}_i,\mathbf{x}_o^{\mathrm{obs}})\ge 0
    \right\}.
\end{equation}

\subsection{Single-robot Geometry-aware Control Barrier Certificates}\label{sec:single_sdf_cbf}
By using the KKT condition to solve the problem in Eq.~\eqref{eq: h_sol}, 
we can obtain the optimal value of the barrier function $h$ and the closest point to the robot, $\mathbf{p}^*$. We propose the following proposition to define the safe control space.

\begin{proposition} [\textbf{Geometry-aware Control Barrier Certificates}]\label{proposition:geo_cbf}
    Consider robot $i$ with control–affine dynamics ~\eqref{eq:dynamics} moving in a workspace with any-shaped static obstacles
    which are enclosed by a unified SDF $\hat{\mathcal{S}}_{\mathcal{O}}$. The barrier function $h(\mathbf{x}_i, \mathbf{x}_o^{obs})$ and the corresponding obstacle boundary point are $\mathbf{p}_{i,o}^*$ in Eq.~\eqref{eq: h_sol}. Following Lemma.~\ref {lem:cbf}, for the robot $i$ and the environment with obstacles $\mathcal{O}$, the admissible control space for any Lipschitz continuous controller rendering the safe set in Eq.~\eqref{eq:our_safe_set} forward invariant, thus becomes:
    \begin{equation}\label{eq: prop}
    \begin{aligned}
       \mathcal{B}^\mathrm{obs}_\mathbf{u}= \big\{ \mathbf{u}_i \in \mathcal{U}_i \mid \; & a_{i}(\mathbf{x}_i, \mathbf{p}_{i,o}^*) + A_{i}(\mathbf{x}_i, \mathbf{p}_{i,o}^*) \mathbf{u}_i \\
       & \geq -\gamma h(\mathbf{x}_i, \mathbf{x}_o^{obs}) \big\}
    \end{aligned}
    \end{equation}
    where $a_{i}(\mathbf{x}_i, \mathbf{p}_{i,o}^*) = \nabla_{\mathbf{x}_i}F_{i}(\mathbf{x}_i, \mathbf{p}_{i,o}^*)^{\top}f_i(\mathbf{x}_i)$ and $A_{i}(\mathbf{x}_i, \mathbf{p}_{i,o}^*) = \nabla_{\mathbf{x}_i}F_{i}(\mathbf{x}_i, \mathbf{p}_{i,o}^*)^{\top}g_i(\mathbf{x}_i)$.
\end{proposition}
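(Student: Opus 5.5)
The plan is to reduce the statement to Lemma~\ref{lem:cbf}. Since the obstacles are static, $h$ depends on time only through $\mathbf{x}_i$. The work is to show two things:
\begin{itemize}
\item $\dot h$ along~\eqref{eq:dynamics} equals $a_i + A_i\mathbf{u}_i$, with $\mathbf{p}_{i,o}^*$ held fixed;
\item $h$ is continuously differentiable, as Lemma~\ref{lem:cbf} requires.
\end{itemize}

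\textbf{Step 1: the value function.} Write $h(\mathbf{x}_i)=\min_{\mathbf{p}}\{F_i(\mathbf{x}_i,\mathbf{p}) : F_o(\mathbf{x}_o^{\mathrm{obs}},\mathbf{p})=0\}$. The constraint set does not depend on $\mathbf{x}_i$ because the obstacles are static. Smoothness of the ingredients comes from earlier definitions:
\begin{itemize}
\item $F_i$ is a polynomial in the normalized coordinates composed with the rigid transformation $\Phi_i^{-1}$, so it is $C^\infty$ in $(\mathbf{x}_i,\mathbf{p})$.
\item $F_o$ is a polynomial in $\mathbf{p}$.
\end{itemize}

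\textbf{Step 2: KKT and the envelope argument.} At the minimizer, the KKT conditions give $\nabla_{\mathbf{p}}F_i(\mathbf{x}_i,\mathbf{p}^*)+\lambda^*\nabla_{\mathbf{p}}F_o(\mathbf{p}^*)=0$ with $F_o(\mathbf{p}^*)=0$. This requires the regularity assumption $\nabla_{\mathbf{p}}F_o(\mathbf{p}^*)\neq 0$ on the zero level set.

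I then apply the envelope theorem for equality-constrained problems, via the Lagrangian $L=F_i+\lambda F_o$. Because $F_o$ is independent of $\mathbf{x}_i$, this gives
\begin{equation*}
\nabla_{\mathbf{x}_i}h(\mathbf{x}_i)=\nabla_{\mathbf{x}_i}F_i(\mathbf{x}_i,\mathbf{p})\big|_{\mathbf{p}=\mathbf{p}_{i,o}^*}.
\end{equation*}
Equivalently, by the chain rule, $\dot h=\nabla_{\mathbf{x}_i}F_i^\top\dot{\mathbf{x}}_i+\nabla_{\mathbf{p}}F_i^\top\dot{\mathbf{p}}^*$. The second term vanishes: by stationarity it equals $-\lambda^*\nabla_{\mathbf{p}}F_o^\top\dot{\mathbf{p}}^*$, and $\nabla_{\mathbf{p}}F_o^\top\dot{\mathbf{p}}^*=0$ because $F_o(\mathbf{p}^*(t))\equiv 0$. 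Substituting~\eqref{eq:dynamics} yields $\dot h = a_i(\mathbf{x}_i,\mathbf{p}_{i,o}^*)+A_i(\mathbf{x}_i,\mathbf{p}_{i,o}^*)\mathbf{u}_i$.

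\textbf{Step 3: conclusion.} With $\kappa(h)=\gamma h$, the condition of Lemma~\ref{lem:cbf} becomes exactly the inequality in~\eqref{eq: prop}. Any Lipschitz controller taking values in $\mathcal{B}^{\mathrm{obs}}_{\mathbf{u}}$ then keeps $h\ge 0$, so the safe set~\eqref{eq:our_safe_set} is forward invariant.

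\textbf{Main obstacle.} The hard part is justifying that $\mathbf{p}_{i,o}^*(\mathbf{x}_i)$ is unique and differentiable, so that $h$ is $C^1$. I would use the implicit function theorem on the KKT system in $(\mathbf{p},\lambda)$. Its Jacobian, the bordered Hessian of $L$, is nonsingular when the second-order sufficient condition holds strictly on the tangent space of $\{F_o=0\}$. I would argue this from the convexity of the level sets assumed in the contributions, namely smooth, strictly convex robot and obstacle level sets near the contact region. Strict convexity gives a unique closest point, and together with regularity it gives a nondegenerate minimizer.

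If uniqueness fails, for example at non-convex obstacle configurations, I would fall back to treating $h$ as a nonsmooth minimum function. In that case I would invoke the nonsmooth barrier results cited after Lemma~\ref{lem:cbf}, using Danskin's theorem: impose the constraint at every active minimizer.
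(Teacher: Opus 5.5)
Your proposal is correct and follows the paper's proof. Both arguments differentiate $h=F_i(\mathbf{x}_i,\mathbf{p}_{i,o}^*)$ by the chain rule and use KKT stationarity to make $\nabla_{\mathbf{p}}F_i$ parallel to the obstacle normal, so its inner product with the tangent velocity $\dot{\mathbf{p}}_{i,o}^*$ vanishes, then invoke Lemma~\ref{lem:cbf} with $\kappa(h)=\gamma h$; the only difference is that you justify smoothness of $\mathbf{p}_{i,o}^*$ (LICQ via $\nabla_{\mathbf{p}}F_o\neq 0$ plus the implicit function theorem under a strict second-order condition), where the paper simply assumes it from convexity of the level sets.
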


\begin{proof}
   We assume the BP-SDF level sets are convex such that the closest point $\mathbf{p}_{i,o}^*$ changes smoothly without discontinuous jumps. Additionally, $||\nabla F_i|| \neq 0$ on the boundary of the BP-SDF learned offline.

   With this, differentiating Eq.~\eqref{eq:h_signed} with respect to time $t$ using the chain rule yields:
    \begin{equation*} 
    \dot{h}(\mathbf{x}_i, \mathbf{x}_o^{\mathrm{obs}}) = \nabla_{\mathbf{x}_i} F_i(\mathbf{x}_i, \mathbf{p}_{i,o}^*)^{\top} \dot{\mathbf{x}}_i + \nabla_{\mathbf{p}} F_i(\mathbf{x}_i, \mathbf{p}_{i,o}^*)^{\top} \dot{\mathbf{p}}_{i,o}^*
    \end{equation*}
   Based on the KKT conditions of the optimization problem in Eq.~\eqref{eq: h_sol}, the spatial gradient $\nabla_{\mathbf{p}} F_i(\mathbf{x}_i, \mathbf{p}^*)$ is parallel to the normal vector of the obstacle boundary at $\mathbf{p}^*$. Since $\dot{\mathbf{p}}^*$ is tangent to the obstacle boundary, their inner product is zero, i.e., $\nabla_{\mathbf{p}} F_i(\mathbf{x}_i, \mathbf{p}^*)^{\top} \dot{\mathbf{p}}^* = 0$.

   Substituting this identity gives us the general form:
   \begin{equation}
    \dot{h}(\mathbf{x}_i, \mathbf{x}_o^{\mathrm{obs}}) = \nabla_{\mathbf{x}_i} F_i(\mathbf{x}_i, \mathbf{p}_{i,o}^*)^{\top} \dot{\mathbf{x}}_i
\end{equation}
   
   Following Lemma~\ref{lem:cbf}, the safe control space can thus be given by Eq.~\eqref{eq: prop}. \hfill $\square$
\end{proof}
This defines the admissible control space that ensures the robot to stay away from the obstacle boundary.
Finally, one can define the safe controller $\mathbf{u}_i^{*}$ for a robot navigating in a multi-obstacle scenario by minimally modifying the nominal control $\mathbf{u}^\mathrm{nom}_i$ with Quadratic Programming (QP):
\begin{align} \label{eq:QP single robot}
    \mathbf{u}^{*}_i=&\arg\min_{\mathbf{u}_i} \quad ||\mathbf{u}_i - \mathbf{u}^{\text{nom}}_i||_2^2 \\
    \text{s.t.} \quad a_{i}(\mathbf{x}_i, \mathbf{p}_{i,o}^*) &+ A_{i}(\mathbf{x}_i, \mathbf{p}_{i,o}^*) \mathbf{u}_i  \geq -\gamma h(\mathbf{x}_i, \mathbf{x}_o^{obs}) \notag\\
     &||\mathbf{u}_i|| \leq \mathbf{u}_i^\mathrm{max} \notag
\end{align}
where $\mathbf{u}_i^\mathrm{max}$ denotes the control bound for robot $i$.

\subsection{Handling Non-Smoothness in Composite Obstacles}

As defined in Section~\ref{sec:bp_unified}, representing multiple obstacles as a single composite set $\mathcal{O}$ provides a unified geometric constraint. However, in such a case, the closest point $\mathbf{p}^*_{i,o}$ on the obstacle boundary can discontinuously jump between different obstacles while the robot is moving. This can make the distance function non-smooth, which violates the continuous differentiability property of CBFs. 

\begin{figure}[t]
    \centering
\includegraphics[width=\linewidth]{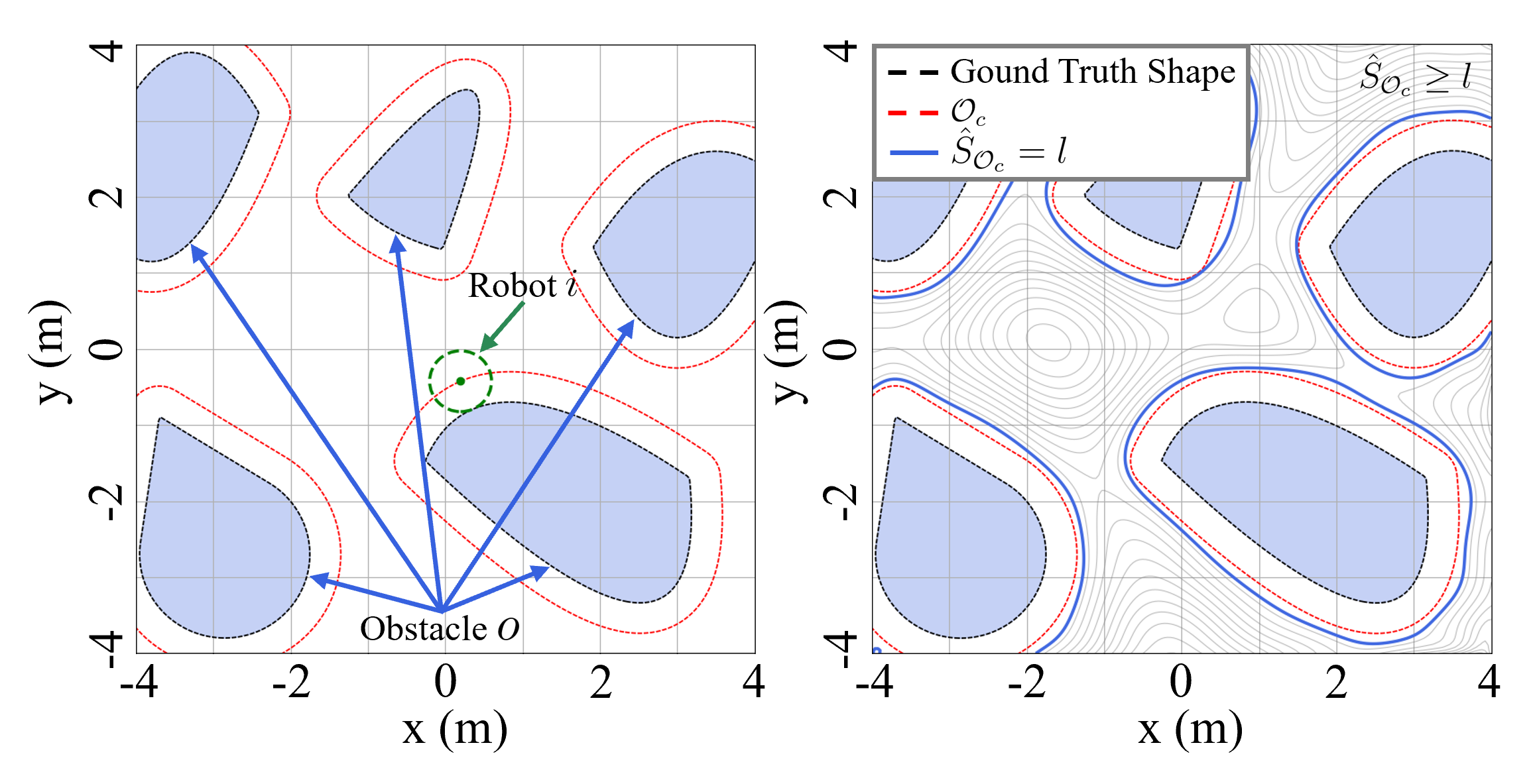}
    \caption{C-space (red) of composite obstacle set $\mathcal{O}_C$ inflated by the radius $r$ of the robot $i$ (green). The right figure shows the SDF $\hat{S}_{\mathcal{O}_C}$ of the C-space (blue contours) with a user-defined safety margin level set $l$.}
    \label{fig:cspace}
\end{figure}

To deal with the composite set of obstacles scenario while maintaining smoothness, we adopt a Configuration Space (C-space) approach. By approximating the robot as a bounding circle of radius $r$ \cite{wang2017safety}, we inflate the ground-truth composite obstacles using the Minkowski sum to create the C-space obstacles, denoted as $\mathcal{O}_{C} = \mathcal{O} \oplus \mathcal{D}(r)$, where $\mathcal{D}(r)$ represents a disk of radius $r$.
We then construct a unified BP-SDF for the C-space, denoted as $\hat{S}_{\mathcal{O}_{C}}(\cdot)$. However, if this causes overly conservative behaviors of the robot, we can define each obstacle as an individual safety constraint.

In this C-space formulation, the robot is treated as a point $\mathbf{x}_i$. The barrier function is defined by the BP-SDF of the C-space at the robot's position:
\begin{equation}
    h_{i}^{obs}(\mathbf{x}_i) = \hat{S}_{\mathcal{O}_{C}}(\mathbf{x}_i) = \Psi(\mathbf{x}_i)^\top \mathbf{w}_o
\end{equation}
Because the Bernstein polynomial approximation is inherently smooth and continuously differentiable, evaluating $\hat{S}_{\mathcal{O}_{C}}(\mathbf{x}_i)$  guarantees a smooth CBF everywhere in the workspace.

\subsection{Multi-robot Geometry-aware Control Barrier Certificates}
We continue to extend our Geometry-aware Control Barrier Function in~\eqref{eq:h_signed} for the multi-robot collision avoidance case. Consider a team of $N$ robots, similarly to~\eqref{eq: h_sol}, for any pair of robots $i$ and $j$, one can calculate the 
closest point $\mathbf{p}_{i,j}^*$ on the boundary $j$ to robot $i$. 
Then for any pair-wise collision avoidance in the robot team, the safety of $\mathbf{x} = \{\mathbf{x}_1, \cdots, \mathbf{x}_i, \cdots, \mathbf{x}_N\}$ is defined as:
\begin{equation}\label{eq:cbf_multi}
h(\mathbf{x}_i, \mathbf{x}_j) = F_i(\mathbf{x}_i, \mathbf{p}_{i,j}^*), \forall i \neq j
\end{equation}

With this, $h(\mathbf{x}_i, \mathbf{x}_j) \ge 0$ indicates that, from robot $i$' view, the robot $j$ is sufficiently away from robot $i$. Accordingly, the safe set is defined as
\begin{equation}\label{eq:our_safe_set_multi}
    \mathcal{H}_{i,j}
    =
    \left\{ \mathbf{x}\in\mathbb{R}^{dN}
    \mid
    h(\mathbf{x}_i,\mathbf{x}_j)\ge 0
    \right\}, \forall i \neq j.
\end{equation}

\begin{proposition} [\textbf{Multi-robot Geometry-aware Control Barrier Certificates}]\label{proposition:Multi_cbf}
Considering a team of $N$ robots moving in the same workspace, each robot has its own robot-specific SDF approximated by Bernstein polynomials. $\mathbf{u} = \{\mathbf{u}_1, \cdots,\mathbf{u}_{N}\}$ is the joint control of the $N$ robots.
As shown in Fig.~\ref{fig:kkt_idea}(d), for any pair of robots $i$ and $j$, the optimal point $\mathbf{p}_{i,j}^*$ moves along the surface of robot $j$. By taking the time derivative of the boundary constraint $F_j(\mathbf{x}_j, \mathbf{p}_{i,j}^*) = 0$ and applying the KKT conditions, the inter-robot collision avoidance can be enforced using the following \textbf{Multi-robot Geometry-aware Control Barrier Certificates}:
\begin{equation}\label{eq: robots_prop}
\begin{split}
    \mathcal{B}^\mathrm{robots}_\mathbf{u} = \big\{ \mathbf{u} \in \mathcal{U} \mid \ 
    & a_{i}(\mathbf{x}_i, \mathbf{p}_{i,j}^*) + a_{j}(\mathbf{x}_j, \mathbf{p}_{i,j}^*) \\
    & + A_{i}(\mathbf{x}_i, \mathbf{p}_{i,j}^*)\mathbf{u}_i + A_{j}(\mathbf{x}_j, \mathbf{p}_{i,j}^*) \mathbf{u}_j \\
    & \geq -\gamma h(\mathbf{x}_i, \mathbf{x}_j) \big\}
\end{split}
\end{equation}

where $a_{j}(\mathbf{x}_j, \mathbf{p}_{i,j}^*) = \lambda \nabla_{\mathbf{x}_j} F_{j}(\mathbf{x}_j, \mathbf{p}_{i,j}^*)^{\top} f_j(\mathbf{x}_j)$,
$A_{j}(\mathbf{x}_j, \mathbf{p}_{i,j}^*) = \lambda \nabla_{\mathbf{x}_j} F_{j}(\mathbf{x}_j, \mathbf{p}_{i,j}^*)^{\top} g_j(\mathbf{x}_j)$,
and the scalar multiplier $\lambda = \frac{\lVert \nabla_{\mathbf{p}} F_{i}(\mathbf{x}_i, \mathbf{p}_{i,j}^*) \rVert}{\lVert \nabla_{\mathbf{p}} F_{j}(\mathbf{x}_j, \mathbf{p}_{i,j}^*) \rVert}$.

\end{proposition}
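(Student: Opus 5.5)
We follow the proof of Proposition~\ref{proposition:geo_cbf} and keep its standing assumptions. The BP-SDF level sets of robots $i$ and $j$ are convex, so $\mathbf{p}_{i,j}^*$ varies continuously differentiably along trajectories. We also assume $\|\nabla_{\mathbf{p}} F_i\|\neq 0$ and $\|\nabla_{\mathbf{p}} F_j\|\neq 0$ at $\mathbf{p}_{i,j}^*$. The difference from the single-robot case is that the constraint $F_j(\mathbf{x}_j,\mathbf{p})=0$ now moves with $\mathbf{x}_j$. Hence $\dot{\mathbf{p}}_{i,j}^*$ is no longer tangent to a fixed surface, and the term $\nabla_{\mathbf{p}}F_i^\top\dot{\mathbf{p}}^*$ does not vanish. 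The plan is to express that term through robot $j$'s state velocity.

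First, we differentiate \eqref{eq:cbf_multi} in time by the chain rule:
\begin{equation*}
\dot h = \nabla_{\mathbf{x}_i}F_i(\mathbf{x}_i,\mathbf{p}_{i,j}^*)^\top\dot{\mathbf{x}}_i + \nabla_{\mathbf{p}}F_i(\mathbf{x}_i,\mathbf{p}_{i,j}^*)^\top\dot{\mathbf{p}}_{i,j}^*.
\end{equation*}
Second, we write the KKT stationarity condition for the analogue of \eqref{eq: h_sol}: $\nabla_{\mathbf{p}}F_i = \mu\,\nabla_{\mathbf{p}}F_j$ at $\mathbf{p}_{i,j}^*$ for some scalar $\mu$. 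Taking norms gives $|\mu| = \lambda$ as defined in the statement.

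The sign of $\mu$ needs an argument. At the minimizer, $\nabla_{\mathbf{p}}F_i$ points away from robot $i$, that is, toward robot $j$. Meanwhile $\nabla_{\mathbf{p}}F_j$ points outward from robot $j$, hence back toward robot $i$. So the two gradients look anti-parallel, which would give $\mu=-\lambda$. We must then reconcile this with the $+\lambda$ in \eqref{eq: robots_prop}, and the boundary-constraint derivative in the next step supplies the missing sign. This sign bookkeeping, together with justifying differentiability of $\mathbf{p}_{i,j}^*$, is where we expect the main difficulty.

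Third, we differentiate the boundary constraint $F_j(\mathbf{x}_j,\mathbf{p}_{i,j}^*)=0$ along trajectories:
\begin{equation*}
\nabla_{\mathbf{x}_j}F_j^\top\dot{\mathbf{x}}_j + \nabla_{\mathbf{p}}F_j^\top\dot{\mathbf{p}}_{i,j}^*=0,
\end{equation*}
which gives $\nabla_{\mathbf{p}}F_j^\top\dot{\mathbf{p}}^* = -\nabla_{\mathbf{x}_j}F_j^\top\dot{\mathbf{x}}_j$. Substituting the stationarity relation yields
\begin{equation*}
\nabla_{\mathbf{p}}F_i^\top\dot{\mathbf{p}}^* = \mu\,\nabla_{\mathbf{p}}F_j^\top\dot{\mathbf{p}}^* = -\mu\,\nabla_{\mathbf{x}_j}F_j^\top\dot{\mathbf{x}}_j.
\end{equation*}
With $\mu=-\lambda$, this equals $\lambda\,\nabla_{\mathbf{x}_j}F_j^\top\dot{\mathbf{x}}_j$. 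The tangential component of $\dot{\mathbf{p}}^*$ drops out automatically, since only the normal component is paired with $\nabla_{\mathbf{p}}F_j$.

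Fourth, we substitute $\dot{\mathbf{x}}_i=f_i+g_i\mathbf{u}_i$ and $\dot{\mathbf{x}}_j=f_j+g_j\mathbf{u}_j$ from \eqref{eq:dynamics}. This gives $\dot h = a_i + A_i\mathbf{u}_i + a_j + A_j\mathbf{u}_j$ with exactly the coefficients in the statement. Applying Lemma~\ref{lem:cbf} with $\kappa(h)=\gamma h$ to the joint state $\mathbf{x}$ and the joint control $\mathbf{u}$ then yields \eqref{eq: robots_prop}, which renders $\mathcal{H}_{i,j}$ forward invariant for each pair $i\neq j$.
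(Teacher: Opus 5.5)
Your proposal is correct and follows the paper's own route. The paper likewise applies the chain rule to $h=F_i(\mathbf{x}_i,\mathbf{p}_{i,j}^*)$, uses KKT stationarity $\nabla_{\mathbf{p}}F_i=-\lambda\nabla_{\mathbf{p}}F_j$, and differentiates the boundary constraint $F_j(\mathbf{x}_j,\mathbf{p}_{i,j}^*)=0$ to eliminate $\dot{\mathbf{p}}_{i,j}^*$. The one addition is your geometric justification that the gradients are anti-parallel at the minimizer, so the multiplier equals $-\lambda$; the paper only asserts this sign.
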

\begin{proof}
Taking the time derivative of $h(\mathbf{x}_i, \mathbf{x}_j) = F_i(\mathbf{x}_i, \mathbf{p}_{i,j}^*)$ yields $\dot{h}(\mathbf{x}_i, \mathbf{x}_j) = \nabla_{\mathbf{x}_i} F_i^{\top} \dot{\mathbf{x}}_i + \nabla_{\mathbf{p}} F_i^{\top} \dot{\mathbf{p}}_{i,j}^*$. From the KKT stationarity conditions, we have $\nabla_{\mathbf{p}} F_i = - \lambda \nabla_{\mathbf{p}} F_j$. Substituting this gives $\dot{h}(\mathbf{x}_i, \mathbf{x}_j) = \nabla_{\mathbf{x}_i} F_i^{\top} \dot{\mathbf{x}}_i - \lambda \nabla_{\mathbf{p}} F_j^{\top} \dot{\mathbf{p}}_{i,j}^*$.

Because $\mathbf{p}_{i,j}^*$ must remain on the boundary of robot $j$, the constraint $F_j(\mathbf{x}_j, \mathbf{p}_{i,j}^*) = 0$ is satisfied at all times. Differentiating this gives $\nabla_{\mathbf{x}_j} F_j^{\top} \dot{\mathbf{x}}_j + \nabla_{\mathbf{p}} F_j^{\top} \dot{\mathbf{p}}_{i,j}^* = 0$, which allows us to substitute out the velocity $\dot{\mathbf{p}}_{i,j}^*$, yielding:
\begin{equation}
    \dot{h}(\mathbf{x}_i, \mathbf{x}_j) = \nabla_{\mathbf{x}_i} F_i(\mathbf{x}_i, \mathbf{p}_{i,j}^*)^{\top} \dot{\mathbf{x}}_i + \lambda \nabla_{\mathbf{x}_j} F_j(\mathbf{x}_j, \mathbf{p}_{i,j}^*)^{\top} \dot{\mathbf{x}}_j
\end{equation}
Following Lemma~\ref{lem:cbf}, one can obtain the safe control space for the multi-robot system in \eqref{eq: robots_prop}.
\hfill $\square$
\end{proof}

With this, one can define the following safe controller $\mathbf{u}^{*}$ for the team of robots by minimally modifying their nominal control $\mathbf{u}^\mathrm{nom}$ with the following QP: 
\begin{align}\label{eq: QP multi-robot}
    & \mathbf{u}^{*} =  \arg\min_\mathbf{u} \quad ||\mathbf{u} - \mathbf{u}^{\text{nom}}||^2 \\
    \text{s.t.} \quad &a_{i}(\mathbf{x}_i, \mathbf{p}^*_{i,j}) + a_{j}(\mathbf{x}_j, \mathbf{p}^*_{i,j}) + A_{i}(\mathbf{x}_i, \mathbf{p}^*_{i,j})\mathbf{u}_i \notag \\
    &+ A_{j}(\mathbf{x}_j, \mathbf{p}^*_{i,j}) \mathbf{u}_j \geq -\gamma h(\mathbf{x}_i, \mathbf{x}_j), \forall i \neq j \notag\\
    &\quad ||\mathbf{u}_i|| \leq \mathbf{u}_i^\mathrm{max},\quad \forall i = 1,...,N \notag
\end{align}

\begin{remark}\label{remark:convex}
    While it has been proven that a minimum distance function between two convex sets is a smooth function \cite{thirugnanam2025control}, the learned SDFs are approximations, and some SDF level sets can be non-convex even if the ground truth shape is convex. To handle the non-smoothness, one can formulate a constrained Quadratic Program (QP) for offline learning and strictly enforce convexity on all level sets by constraining second-order difference matrix of the Bernstein weights $w$ to be positive semi-definite. 
\end{remark}

\begin{remark} \label{rem: l level set}
    It is possible that the learned SDFs with Bernstein polynomials and the method in~\cite{li2024representing} has a small approximation error. 
    A user-defined safety margin level set $l$ can be used to guarantee the enclosure of the ground truth shape in the learned shape by changing the constraints in Eq.~\eqref{eq: h_sol} as $F_{o}(\mathbf{x}_o^{\mathrm{obs}}, \mathbf{p}) -l = 0$ ($l \geq 0 \in \mathbb{R}$).
\end{remark}

\begin{remark}
    It is possible that our method would get into a deadlock. Readers can refer to~\cite{reis2020control, zhang2025adaptive, Lee2025MerryGoRound} for single-robot and multi-robot deadlock resolution.
\end{remark}

\section{Results}

In this section, we conduct simulations to demonstrate the robustness and versatility of the proposed framework. By evaluating its performance across diverse environments, robot dynamics, and homogeneous as well as heterogeneous teams, we establish its broad applicability in ensuring safe and reliable multi-robot coordination. Note that the proposed algorithm is evaluated under \emph{different} system dynamics, including single-integrator and unicycle dynamics.

\begin{figure}[H]
    \centering
    \includegraphics[width=\linewidth]{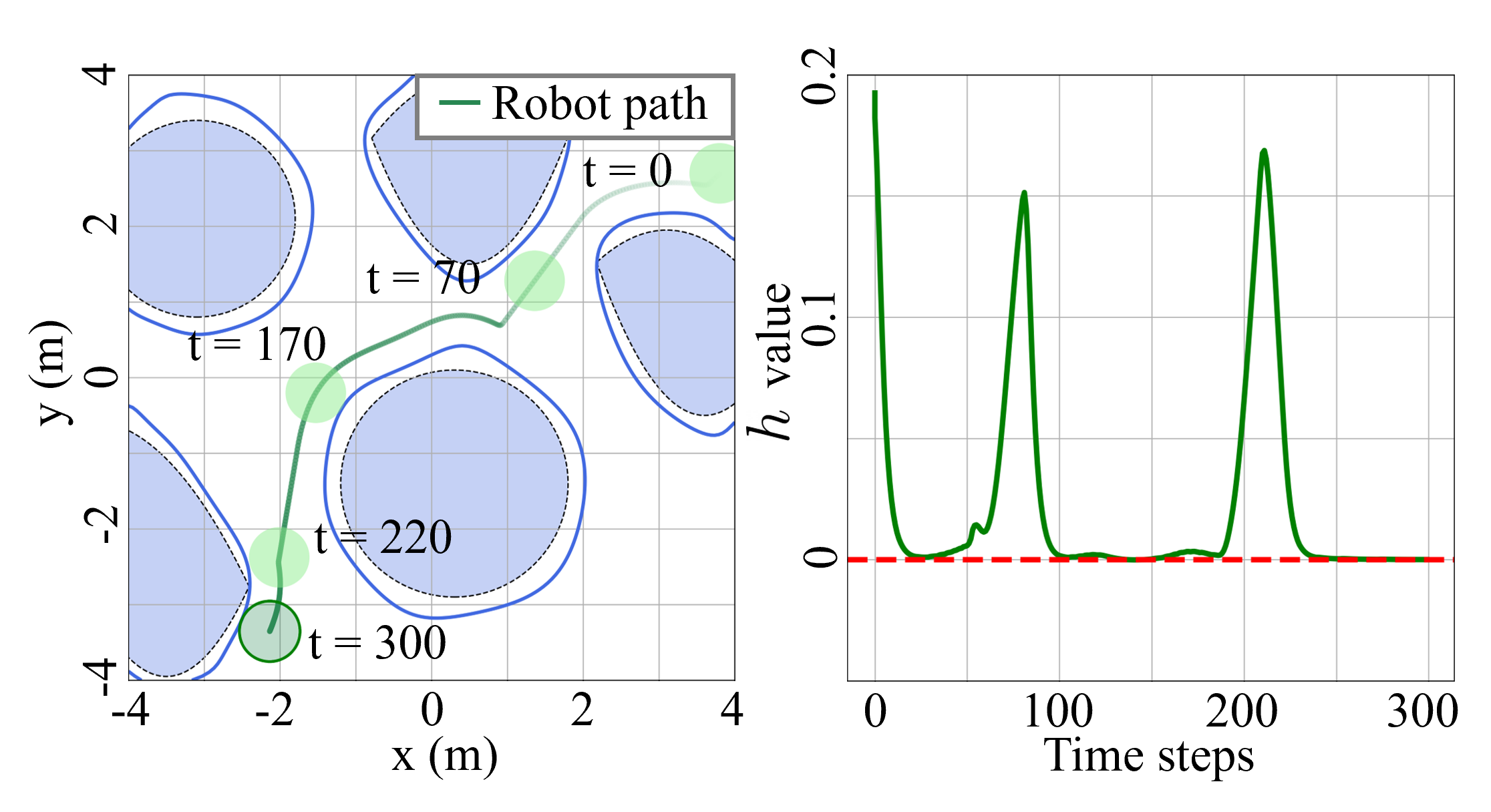}
    \caption{\textbf{Left}: A single BP-SDF (Q = 23) is used to represent five distinct static obstacles, and its 0.27-level set is used to enclose the ground-truth shapes of the static obstacles. The green line shows the robot trajectory starting from $(3.8, 2.7)$ and visiting the designated goal position $(-2.2, -3.5)$.
     \textbf{Right}: The green line represents the value of the barrier function over time steps. The barrier function is obtained from a single CBF constructed using a BP-SDF, which compactly encodes all obstacle-related safety constraints into one unified condition.}
    \label{fig:multi_obstacle}
\end{figure}

In the first experiment, we consider a single robot navigating through an environment populated with multiple obstacles of varying shapes. The robot dynamics are modeled as a single-integrator system,
such that $\dot{\mathbf{x}}_i = \mathbf{u}_i$, 
where $\mathbf{x}_i \in \mathbb{R}^2$ denotes the robot position and $\mathbf{u}_i \in \mathbb{R}^2$ is the control input. The navigation objective is to reach the designated time-invariant goal position while avoiding collisions with obstacles. The nominal task controller is $\mathbf{u}^\mathrm{nom} = -k(\mathbf{x}_i - \mathbf{x}^{\mathrm{goal}}_i)$, where $k$ is the control gain. As illustrated in Figure~\ref{fig:multi_obstacle}, the robot starts at $\mathbf{x} = [3.8, 2.7]^\top$ and visits the goal $\mathbf{x}^{\mathrm{goal}} = [-2.2,-3.5]^\top$.

To enforce safety, we construct the control barrier function (CBF) as in Eq.~\eqref{eq:h_signed}. $\mathbf{p}^\ast$ lies on the $0.27$-level set (setting $l=0.27$ as stated in Remark.~\ref{rem: l level set}) of the signed distance field approximated by a Bernstein polynomial with order $Q=23$. The use of the $0.27$-level set ensures that all five obstacles are fully enclosed within a \emph{single} unified BP-SDF. In Figure~\ref{fig:multi_obstacle}, the dashed contours indicate the ground-truth obstacle boundaries, while the fading green curve in the left subfigure illustrates the robot’s trajectory. The right subfigure further depicts the temporal evolution of the barrier function, confirming that the safety constraint is satisfied throughout the navigation process.

\subsection{Safe Navigation of Two Robots under Unicycle Dynamics} \label{sec: two robot case}

In the second set of experiments, we consider robots governed by unicycle dynamics:
\begin{equation}
\dot{\mathbf{x}}_i=\begin{bmatrix}
\dot{x}_i \\
\dot{y}_i \\
\dot{\theta}_i
\end{bmatrix}
=
\begin{bmatrix}
\cos(\theta_i) & 0 \\
\sin(\theta_i) & 0 \\
0 & 1
\end{bmatrix}
\begin{bmatrix}
v_i \\
\omega_i
\end{bmatrix},
\end{equation}
where $[x_i,y_i,\theta_i]^{T}$ denotes the state of robot $i$ in the global frame, $v_i$ is the linear velocity, and $\omega_i$ is the angular velocity. The control limits are $|v_i| \leq 1$ m/s and $|\omega_i| \leq \pi/2$ rad/s.  

Given the robot’s current state $(x_i,y_i,\theta_i)$ and its corresponding time-invariant goal position $(x_i^g, y_i^g)$, the nominal task controller is designed as
$\hat{v}_i
 = k_v \rho_i, \; \hat{\omega}_i = k_\omega \alpha_i,$
where $k_v,k_\omega > 0$ are proportional gains. 
Specifically, $\rho_i$ denotes the Euclidean distance to the goal and $\alpha_i$ the heading error, given by
$\rho_i = \sqrt{(x_i^g - x_i)^2 + (y_i^g - y_i)^2}, \;
\theta_i^g = \operatorname{atan2}(y_i^g - y_i,\, x_i^g - x_i), \;
\alpha_i = \theta_i^g - \theta_i$. Alternatively, one may employ well-developed unicycle controllers, e.g., \cite{samson2002control}, to construct the nominal controller $\mathbf{u}^\mathrm{nom}$ and drive the robot toward the desired goal. Note that a detailed discussion of such nominal controller design is outside the scope of this paper.

Fig.~\ref{fig:experiment2_sim} illustrates the two-robot position-switching task: the figures show the robot motions with approximated shapes obtained from BP-SDFs. Fig.~\ref{fig:experiment_2_analysis} shows the linear and angular velocity inputs and their respective limits, together with the evolution of barrier functions. The results confirm that the robots complete the task without collision under different dynamics, while satisfying both the safety constraint and control bounds.

\begin{figure*}
    \centering
\includegraphics[width=0.9\linewidth]{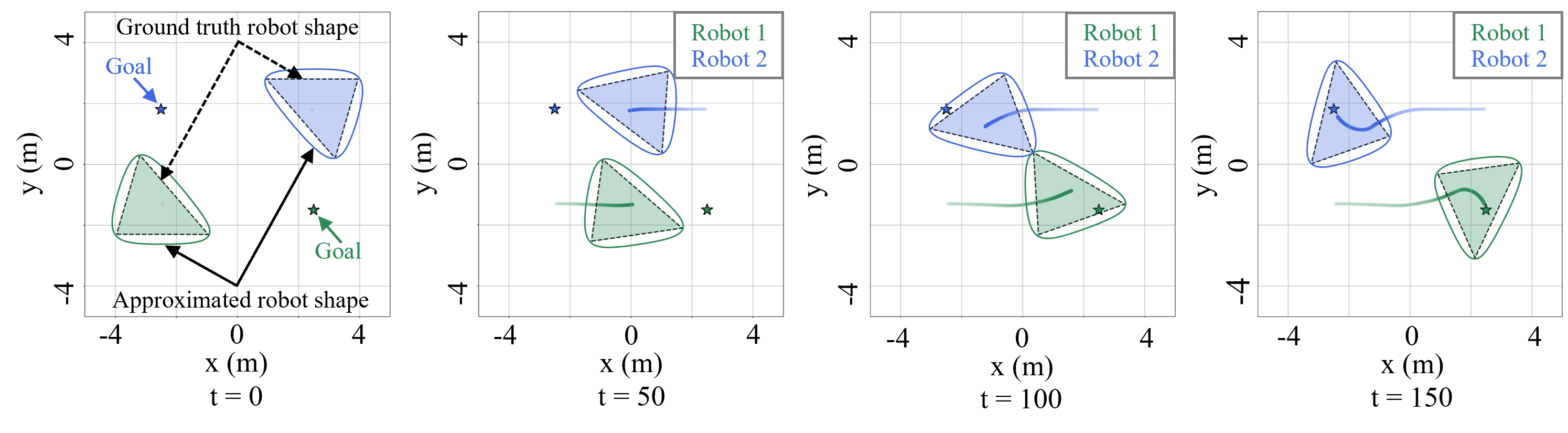}
    \caption{Two-robot position-switching under unicycle dynamics. 
The solid lines and dashed lines represent the approximated shapes by BP-SDFs and ground truth shapes of the robots, respectively. Snapshots at $t=0$, $50$, $100$, and $150$ are shown as representative time instances. 
The visualizations indicate that the robots maintain collision-free execution throughout the entire position-switching process.
}
    \label{fig:experiment2_sim}
\end{figure*}

\begin{figure*}
    \centering
    \includegraphics[width=0.95\linewidth]{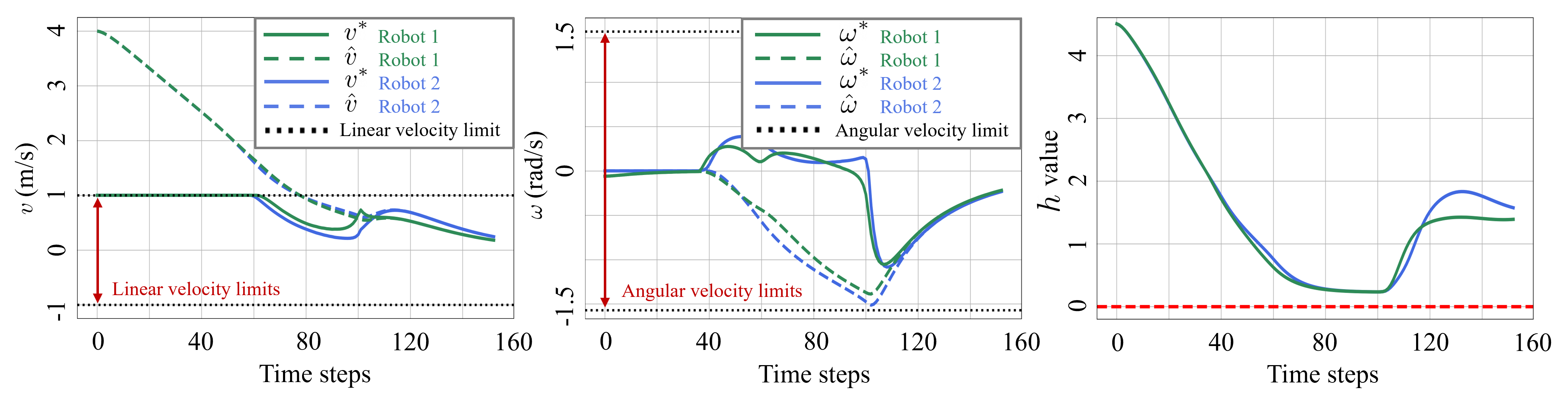}
    \caption{Control input and safety analysis for the two-robot position-switching task. 
\textbf{Left}: linear velocity inputs $v^{*}$ (solid) compared with nominal task inputs $\hat{v}$ (dashed) for both robots, together with the linear velocity limits. 
\textbf{Middle}: angular velocity inputs $\omega^*$ (solid) compared with nominal task inputs $\hat{\omega}$ (dashed), with the corresponding angular velocity limits. 
\textbf{Right}: evolution of the barrier function over time, which remains nonnegative, verifying that the safety constraint is satisfied throughout the entire execution.}
    \label{fig:experiment_2_analysis}
\end{figure*}

\subsection{Safe Navigation of Four Heterogeneous Robots}
\begin{figure*}
    \centering
    \includegraphics[width=0.9\linewidth]{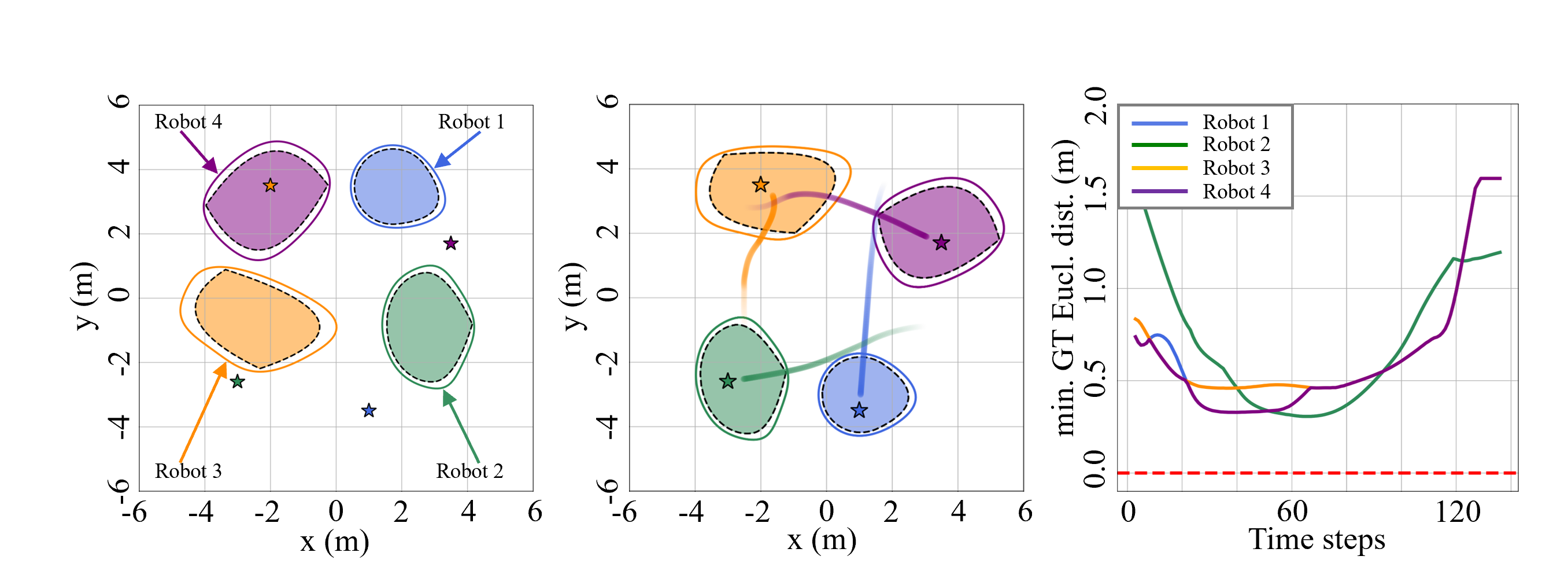}
        \caption{Collision avoidance among four mobile robots with different smooth convex shapes represented by the respective SDF level sets. The right figure shows the minimum Euclidean distance between the ground truth shapes remaining positive over time, which validates the safety guarantees as the robots move to the designated goal positions (marked by stars with the same colors as the robots). 
        }
    \label{fig:exe_3} 
\end{figure*}

The robot dynamics and the design of the task-related controller in this experiment follow the same formulation as in Section.~\ref{sec: two robot case}.  
In Fig.~\ref{fig:exe_3}, the heterogeneous example involves four robots with distinct geometries approximated by its own SDF. Accordingly, four separate control barrier functions ($h_1, h_2, h_3, h_4$) are required. The results in the Fig.~\ref{fig:experiment2_sim} and Fig.~\ref{fig:exe_3}  demonstrate that the proposed framework extends naturally from homogeneous to heterogeneous multi-robot teams while ensuring safety.

It is worth noting that, as illustrated in Fig.~\ref{fig:exe_3}, heterogeneous robot shapes are difficult to approximate using Minkowski-sum–based approaches, which typically rely on conservative simplifications such as bounding circles or convex polygons \cite{11312188}. In contrast, our framework introduces a new CBF formulation that ensures safety under BP-SDF–based modeling of complex and realistic robot geometries.

\subsection{Quantitative Results}
We evaluate our framework on a desktop with 64GB of RAM and an AMD Ryzen 9 3900X processor. For each obstacle count \(m \in \{1,2,4,6,\ldots,24\}\), we conducted five independent randomized trials, each with a \emph{distinct} obstacle configuration and \emph{different} start and goal position pair, and report two summary metrics. (i) The average per–timestep computation time \(\bar{\tau}\) (s/step) and (ii) the ground-truth minimum robot–obstacle distance \(d^\star\) (true set-to-set Euclidean distance). Fig.~\ref{fig:quantative_result} shows mean curves with shaded area: in the \textbf{top} panel, the band denotes \(\pm 1\) standard deviation and \(\bar{\tau}\) shows slow growth as \(m\) increases, indicating the efficiency of our proposed method; in the \textbf{bottom} panel, the band also denotes \(\pm 1\) standard deviation and \(d^\star>0\) for all \(m\), demonstrating collision-free operation and robustness in complex environments.

\begin{figure}[htbp]
    \centering
\includegraphics[width=0.7\linewidth]{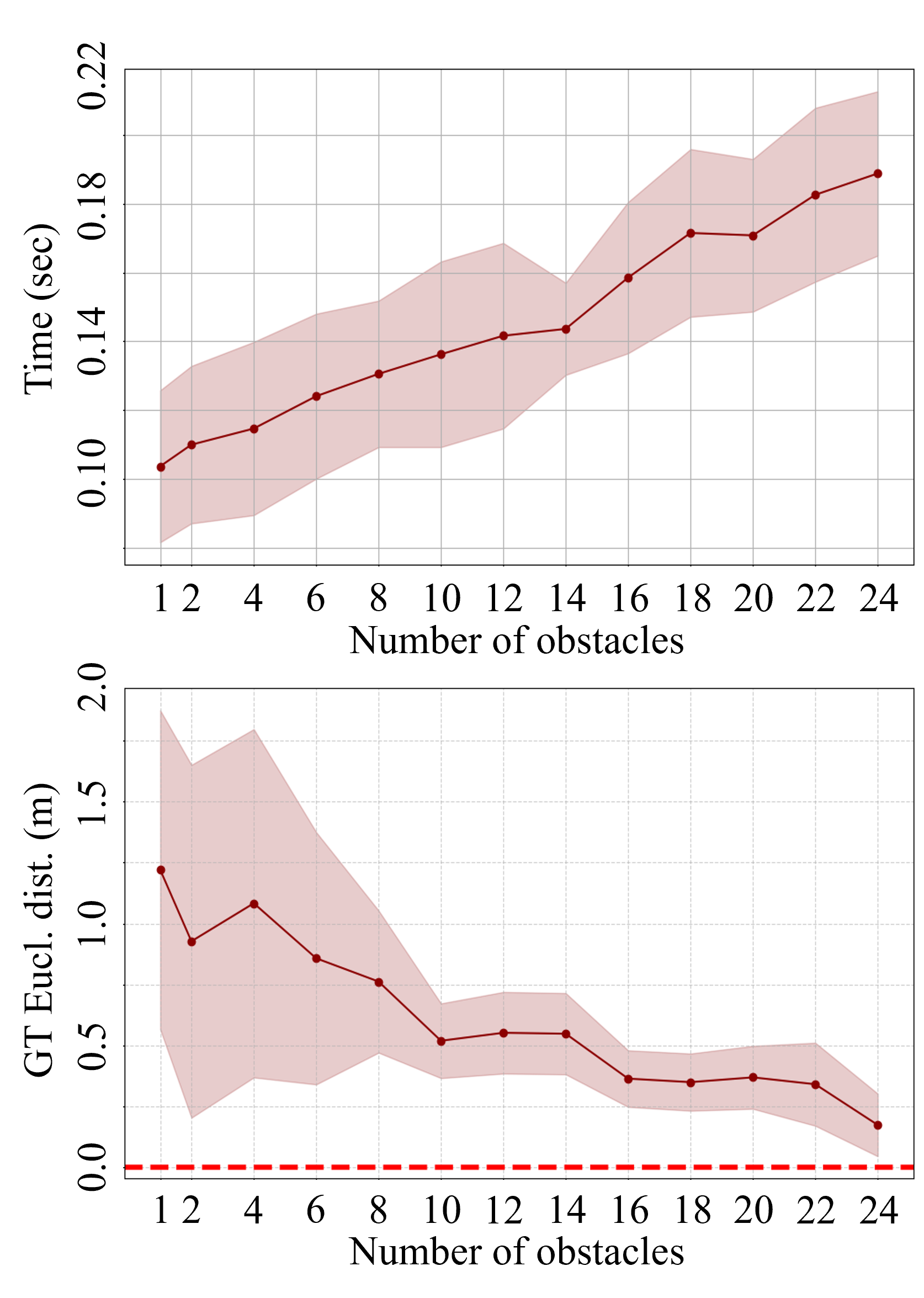}
    \caption{\textbf{Quantitative results.} Curves show means over five trials. \textbf{Top:} per–time-step computation time; shaded area denotes \(\pm 1\) standard deviation. \textbf{Bottom:} minimum ground-truth robot–obstacle distance; shaded area denotes \(\pm 1\) standard deviation.}
\label{fig:quantative_result}
\end{figure}

\section{Conclusion}
In this paper, we proposed a novel method to define Control Barrier Functions using 
Bernstein-Polynomial Signed Distance Field (BP-SDF).
This method utilizes Bernstein polynomials (BPs) to unify the representation of obstacles and robots.
The numerical results across diverse environments, robot dynamics, and homogeneous as well as heterogeneous teams verified the effectiveness of the proposed method. For future work, we will investigate updating the BP-SDF coefficients online using real-time observations, so that the geometry model can be refined after the initial offline learning stage when the shapes of the obstacles change over time.

\bibliographystyle{IEEEtran} 
\bibliography{ref}
\end{document}